\documentclass[12pt,a4paper]{article}

\usepackage[margin=2.5cm]{geometry}
\usepackage{amsmath,amssymb,amsthm}
\usepackage{booktabs}
\usepackage{xcolor}
\usepackage{tikz}
\usepackage{pgfplots}
\pgfplotsset{compat=1.18}
\usepackage{hyperref}
\usepackage{nicefrac}
\usepackage{microtype}
\usepackage{graphicx}
\usepackage{caption}
\usepackage{enumitem}
\usepackage{authblk}

\definecolor{mtfA}{RGB}{219,234,254}
\definecolor{mtfB}{RGB}{147,197,253}
\definecolor{mtfC}{RGB}{59,130,246}
\definecolor{mtfD}{RGB}{29,78,216}
\definecolor{mtfE}{RGB}{30,27,100}

\theoremstyle{plain}
\newtheorem{theorem}{Theorem}[section]
\newtheorem{proposition}[theorem]{Proposition}

\theoremstyle{definition}
\newtheorem{definition}{Definition}[section]
\newtheorem{example}{Example}[section]

\theoremstyle{remark}
\newtheorem{remark}{Remark}[section]

\newcommand{\R}{\mathbb{R}}
\newcommand{\chunk}{\mathrm{chunk}}

\hypersetup{colorlinks=true,linkcolor=blue!60!black,citecolor=blue!60!black}

\title{%
  \textbf{The Temporal Markov Transition Field}\\[0.5em]
  \large A Representation for Time-Varying Transition Dynamics\\[0.3em]
  \large in Time Series Analysis
}
\author{Michael Leznik}
\affil {Aristocrat Leisure Limited, London}
\date{8 March, 2026}

\begin{document}
\maketitle
\thispagestyle{empty}

\begin{abstract}
The Markov Transition Field (MTF), introduced by Wang and Oates (2015), encodes
a time series as a two-dimensional image by mapping each pair of time steps to
the transition probability between their quantile states, estimated from a single
global transition matrix. This construction is efficient when the transition
dynamics are stationary, but produces a misleading representation when the
process changes regime over time: the global matrix averages across regimes and
the resulting image loses all information about \emph{when} each dynamical
regime was active. In this paper we introduce the \emph{Temporal Markov
Transition Field} (TMTF), an extension that partitions the series into $K$
contiguous temporal chunks, estimates a separate local transition matrix for each
chunk, and assembles the image so that each row reflects the dynamics local to
its chunk rather than the global average. The resulting $T \times T$ image has
$K$ horizontal bands of distinct texture, each encoding the transition dynamics
of one temporal segment. We develop the formal definition, establish the key
structural properties of the representation, work through a complete numerical
example that makes the distinction from the global MTF concrete, analyse the
bias--variance trade-off introduced by temporal chunking, and discuss the
geometric interpretation of the local transition matrices in terms of process
properties such as persistence, mean reversion, and trending behaviour. The
TMTF is amplitude-agnostic and order-preserving, making it suitable as an
input channel for convolutional neural networks applied to time series
characterisation tasks.
\end{abstract}
\newpage

\section{Introduction}

A central challenge in applying convolutional neural networks (CNNs) to time
series data is the construction of a two-dimensional representation that
preserves the dynamically relevant structure of the series. Several such
representations have been proposed, including the Gramian Angular Field and
the Recurrence Plot \cite{eckmann1987recurrence}. The Markov Transition
Field \cite{wang2015imaging} is among the most informative for tasks that
depend on persistence or mean-reversion properties, because it encodes the
transition structure of the series at the level of quantile states: rather than
recording what values the series takes, it records how the series moves between
levels.

The global MTF construction, as introduced by Wang and Oates, estimates a single
transition probability matrix from the entire series and uses it to fill a
$T \times T$ image. This single-matrix approach is appropriate when the
transition dynamics are constant over time, in which case pooling all available
transitions produces a low-variance estimate of a single generating mechanism.
However, it has a fundamental limitation: the global matrix cannot distinguish a
series with constant dynamics from one that changes regime midway through. If a
process switches from high persistence to strong mean reversion at time $t^*$,
the global transition matrix averages across the two regimes, and the resulting
image has a uniform texture that resembles neither. The information that the
dynamics \emph{changed}---and \emph{when}---is lost.

The Temporal Markov Transition Field addresses this by replacing the single global
matrix with a set of $K$ local matrices, one per temporal segment. The entry at
position $(i,j)$ of the image is filled using the local matrix estimated from the
segment containing time step $i$, rather than the global matrix. The image
acquires horizontal band structure: all rows within the same segment share the
same row-texture (governed by the same local matrix), while rows from different
segments carry the texture of their respective local dynamics. A CNN reading this
image can detect the bands and learn to associate their patterns---diagonal-heavy
for persistent segments, spread for mean-reverting ones, upper-triangular for
trending ones---with the dynamical properties of the generating process.

This paper is organised as follows. Section~2 establishes the building blocks:
quantile binning and the empirical transition matrix. Section~3 recalls the global
MTF and identifies its limitation precisely through a worked example.
Section~4 introduces the TMTF formally, establishes its key properties, and
analyses the bias--variance trade-off. Section~5 continues the worked example
through the TMTF construction. Section~6 interprets the local transition matrices
in terms of process properties. Section~7 discusses the multi-resolution extension.
Section~8 concludes.

\section{Building Blocks}

Let $\mathbf{x} = (x_1, x_2, \ldots, x_T) \in \R^T$ denote a time series of
length $T$, observed at equal time intervals. The MTF family of representations
operates on the \emph{relative ordering} of the observations rather than their
absolute values, achieved through quantile binning.

\subsection{Quantile Binning and the State Sequence}

\begin{definition}[Quantile Bins and State Sequence]
\label{def:bins}
Given a time series $\mathbf{x} \in \R^T$ and a bin count $Q \in \mathbb{N}$,
let $q_0 < q_1 < \cdots < q_Q$ be the empirical quantile boundaries of
$\mathbf{x}$ such that each interval $[q_{k-1}, q_k)$ contains
$\lfloor T/Q \rfloor$ observations (with boundary adjustments for divisibility).
The \emph{quantile state} of observation $x_t$ is
\[
  b_t = k \quad \text{if} \quad x_t \in [q_{k-1},\, q_k),
  \qquad k = 1, \ldots, Q.
\]
The sequence $\mathbf{b} = (b_1, b_2, \ldots, b_T)$ is the \emph{state sequence}
of $\mathbf{x}$.
\end{definition}

The state sequence discards the magnitudes of the observations and retains only
their rank structure. Two time series related by any monotone transformation
produce the same state sequence. The MTF and TMTF are therefore
\emph{amplitude-agnostic}: invariant to rescaling, shifting, and any other
transformation that preserves relative ordering.

The choice of $Q$ controls the resolution of the representation. A large $Q$
produces fine bins and can capture subtle distributional features of the
transitions, but requires more data per bin to estimate transition probabilities
reliably. A small $Q$ is robust with fewer observations but coarser. In practice,
$Q \in \{6, 10, 14\}$ balance resolution and reliability for series of length
$T \ge 200$.

\subsection{The Empirical Transition Matrix}

\begin{definition}[Empirical Transition Probability Matrix]
\label{def:trans-matrix}
Given a state sequence $\mathbf{b} = (b_1, \ldots, b_T)$, the $Q \times Q$
\emph{empirical transition probability matrix} $W$ has entries
\[
  W_{kl}
  = \frac{\#\{t \in \{1,\ldots,T-1\} : b_t = k \;\text{and}\; b_{t+1} = l\}}
         {\#\{t \in \{1,\ldots,T-1\} : b_t = k\}},
  \quad k, l \in \{1,\ldots,Q\}.
\]
Each row sums to unity: $\sum_{l=1}^{Q} W_{kl} = 1$ for all $k$.
\end{definition}

The entry $W_{kl}$ estimates the probability that a series in quantile state $k$
at one time step will be in quantile state $l$ at the next step. The diagonal
entries $W_{kk}$ are particularly informative: large $W_{kk}$ indicates that the
series tends to remain in state $k$ (persistence); small $W_{kk}$ indicates rapid
departure (mean reversion).

Only $T-1$ transitions are available from a series of length $T$, because the
final observation $x_T$ has no successor. The row total for state $k$ therefore
equals the count of times the series occupies state $k$ among
$x_1, \ldots, x_{T-1}$, equivalently the total occurrence count minus one if the
series occupies state $k$ at the final time step.

\section{The Global Markov Transition Field}

\subsection{Definition and Row Degeneracy}

The global MTF \cite{wang2015imaging} assembles the $T \times T$ image by
assigning to each cell $(i,j)$ the transition probability from the state at time
$i$ to the state at time $j$, as estimated by the single global matrix $W$.

\begin{definition}[Global Markov Transition Field]
\label{def:global-mtf}
Given a time series $\mathbf{x}$ with state sequence $\mathbf{b}$ and global
transition matrix $W$, the \emph{global MTF} is the $T \times T$ matrix $M$
with entries
\[
  M_{ij} = W_{b_i,\, b_j}, \qquad i, j \in \{1,\ldots,T\}.
\]
\end{definition}

The entry $M_{ij}$ answers: how probable is the transition from the level at time
$i$ to the level at time $j$, as estimated from the entire series? Because $W$ is
global, every time step contributes equally regardless of when it occurred. A
fundamental structural consequence follows.

\begin{proposition}[Row Degeneracy of the Global MTF]
\label{prop:row-degen}
Two rows $i$ and $i'$ of the global MTF are identical if and only if
$b_i = b_{i'}$.
\end{proposition}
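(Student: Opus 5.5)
The proof will work directly from Definition~\ref{def:global-mtf}. The key observation is that row $i$ of $M$ is the vector $\bigl(W_{b_i,b_1}, W_{b_i,b_2}, \ldots, W_{b_i,b_T}\bigr)$. This vector depends on $i$ only through the state $b_i$, because the column indices $b_j$ are fixed by the state sequence and do not depend on $i$. The plan is to split the statement into its two directions and treat each through this observation.

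\emph{The ``if'' direction} is immediate. If $b_i = b_{i'}$, then $M_{ij} = W_{b_i,b_j} = W_{b_{i'},b_j} = M_{i'j}$ for every $j$, so the rows coincide. This direction needs no assumption at all.

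\emph{The ``only if'' direction} I would reduce to a statement about $W$ itself. First I would note that, by Definition~\ref{def:bins}, every state $l \in \{1,\ldots,Q\}$ is occupied by roughly $\lfloor T/Q\rfloor \ge 1$ observations. Hence for each $l$ there is some $j$ with $b_j = l$. It follows that row $i$ of $M$ contains every entry $W_{b_i,l}$, and so it determines the full row $b_i$ of $W$. Consequently, rows $i$ and $i'$ of $M$ coincide if and only if rows $b_i$ and $b_{i'}$ of $W$ coincide. To conclude $b_i = b_{i'}$, it then suffices that distinct states have distinct rows of $W$.

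This last step is the main obstacle, because it does not follow from the definitions alone. Two different states can have identical empirical transition rows, for example when both always transition to the same successor. A state occupied only at time $T$ also has an undefined row, since its denominator in Definition~\ref{def:trans-matrix} is zero. My plan is therefore to make the implicit hypothesis explicit: the rows of $W$ are pairwise distinct, which holds generically for empirical counts from a nondegenerate process. Under that hypothesis the equivalence follows from the reduction above. I would add a remark giving a small counterexample, to show that the ``only if'' direction genuinely needs this condition, while the ``if'' direction, which carries the structural point about degeneracy, holds unconditionally.
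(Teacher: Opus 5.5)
Your argument is the paper's own: the ``if'' direction is identical, and for the converse the paper only asserts that rows $b_i$ and $b_{i'}$ of $W$ ``differ in general.'' That is exactly the genericity hypothesis you make explicit, and your occupancy step, which is needed to pass from rows of $M$ back to full rows of $W$, is left implicit in the paper. Your caveat is correct, and the statement as written is false without it: $\mathbf{x}=(1,2,4,7,5,6,3,8,9)$ with $Q=3$ gives $\mathbf{b}=(1,1,2,3,2,2,1,3,3)$, so rows $1$ and $2$ of $W$ both equal $(\tfrac{1}{3},\tfrac{1}{3},\tfrac{1}{3})$, and rows $i=1$ and $i'=3$ of $M$ coincide although $b_1\ne b_3$. (Your ``common deterministic successor'' example only works with unequal bin sizes, e.g.\ $T=7$ with sizes $2,2,3$, because the shared successor state must be occupied almost twice as often as each predecessor.)
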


\begin{proof}
$M_{ij} = W_{b_i, b_j}$ for all $j$. If $b_i = b_{i'}$ then $M_{ij} =
W_{b_i, b_j} = W_{b_{i'}, b_j} = M_{i'j}$ for every $j$. Conversely, if
$b_i \ne b_{i'}$ then rows $b_i$ and $b_{i'}$ of $W$ differ in general.
\end{proof}

Proposition~\ref{prop:row-degen} exposes the fundamental limitation. The global
MTF has at most $Q$ distinct row patterns---one per quantile state. All time steps
in the same state produce identical rows regardless of when they occur. The image
contains no information about the temporal location of dynamical regimes: it
cannot distinguish a series with constant dynamics from one whose mechanism
changed at some interior time point.

\subsection{Worked Example: Global MTF on a Regime-Switching Series}

\begin{example}[Global MTF]
\label{ex:global}
Consider
\[
  \mathbf{x} = (12,\; 85,\; 45,\; 18,\; 78,\; 42,\; 15,\; 22,\; 55,\; 48,\; 82,\; 91).
\]
The first six observations zigzag between low and high values (mean-reverting
dynamics). The last six rise monotonically and stay high (persistent dynamics).

\begin{figure}[htbp]
\centering
\begin{tikzpicture}
\begin{axis}[
  width=13cm, height=4.2cm,
  xlabel={Time $t$}, ylabel={$x_t$},
  xtick={1,2,...,12}, xmin=0.5, xmax=12.5, ymin=-5, ymax=105,
  grid=both, grid style={gray!20},
  title={Example series with $Q=3$ quantile bands and $K=2$ chunk boundary},
  title style={font=\small},
  tick label style={font=\footnotesize}, label style={font=\small},
]
\fill[red!8]   (axis cs:0.5,-5)  rectangle (axis cs:12.5,35);
\fill[green!8] (axis cs:0.5,35)  rectangle (axis cs:12.5,63);
\fill[blue!8]  (axis cs:0.5,63)  rectangle (axis cs:12.5,105);
\node[font=\tiny,red!70!black,anchor=west]   at (axis cs:0.6,18) {Low ($k=1$)};
\node[font=\tiny,green!60!black,anchor=west] at (axis cs:0.6,49) {Mid ($k=2$)};
\node[font=\tiny,blue!70!black,anchor=west]  at (axis cs:0.6,83) {High ($k=3$)};
\addplot[blue!70!black,thick,mark=*,mark size=2pt]
  coordinates{(1,12)(2,85)(3,45)(4,18)(5,78)(6,42)
              (7,15)(8,22)(9,55)(10,48)(11,82)(12,91)};
\draw[red!70!black,dashed,thick] (axis cs:6.5,-5)--(axis cs:6.5,105)
  node[pos=1,above,font=\tiny,red!70!black]{chunk boundary};
\end{axis}
\end{tikzpicture}
\caption{The 12-point example series. Quantile bands are shaded. The dashed
vertical line marks the chunk boundary for $K=2$.}
\label{fig:series}
\end{figure}
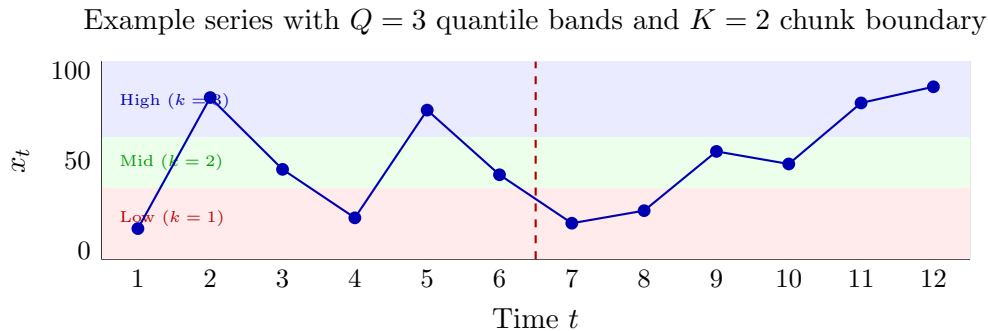

\textbf{Quantile binning} ($Q=3$, four observations per bin):
\begin{center}
\begin{tabular}{llcc}
\toprule
State & Range & Values & Times \\
\midrule
$k=1$ (Low)  & $x \le 35$      & $12,18,15,22$ & $1,4,7,8$ \\
$k=2$ (Mid)  & $35<x\le 63$    & $45,42,55,48$ & $3,6,9,10$ \\
$k=3$ (High) & $x>63$          & $85,78,82,91$ & $2,5,11,12$ \\
\bottomrule
\end{tabular}
\end{center}
State sequence: $\mathbf{b} = (1,3,2,1,3,2,1,1,2,2,3,3)$.

\textbf{Transition tally} (11 consecutive pairs):
\begin{center}
\begin{tabular}{lrrrr}
\toprule
From & To 1 & To 2 & To 3 & Total \\
\midrule
State 1 & 1 & 1 & 2 & 4 \\
State 2 & 2 & 1 & 1 & 4 \\
State 3 & 0 & 2 & 1 & 3 \\
\bottomrule
\end{tabular}
\end{center}
Row totals sum to $T-1=11$.\; State 3 yields only 3 transitions because the
series occupies state 3 at $t=12$ (the final step), which has no successor.
\[
  W = \begin{pmatrix}
    0.25 & 0.25 & 0.50 \\
    0.50 & 0.25 & 0.25 \\
    0.00 & 0.67 & 0.33
  \end{pmatrix}.
\]
The diagonal entries are modest (0.25, 0.25, 0.33), reflecting neither pure
mean-reversion nor pure persistence. The global matrix is a blend of both
regimes. By Proposition~\ref{prop:row-degen}, the $12\times 12$ image has
exactly three distinct row patterns. Rows at $t\in\{1,4,7,8\}$ (all state 1)
are identical; rows at $t=7$ and $t=1$---from opposite halves of the
series---cannot be distinguished. The regime change is invisible.

\begin{figure}[htbp]
\centering
\includegraphics[width=0.82\textwidth]{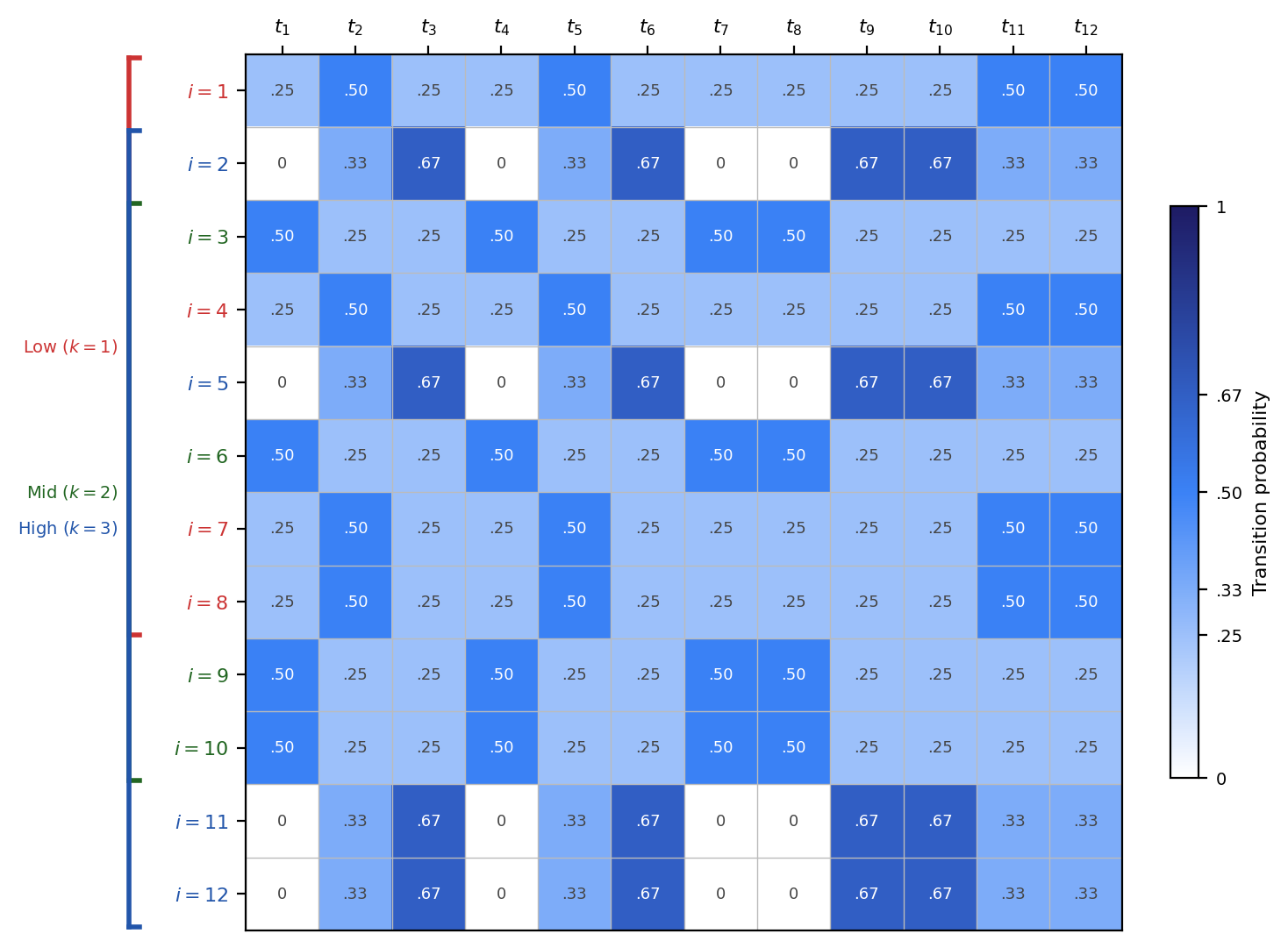}
\caption{The assembled $12\times12$ global MTF image for the worked example
($Q=3$). Row-index colours and left-side brackets group the three sets of
identical rows: red for state~1 ($i \in \{1,4,7,8\}$), green for state~2
($i \in \{3,6,9,10\}$), and blue for state~3 ($i \in \{2,5,11,12\}$). Within
each group the rows are pixel-for-pixel identical regardless of whether they
fall in the mean-reverting first half or the persistent second half of the
series. No band structure is visible; the regime transition at $t=6$ leaves
no trace.}
\label{fig:global-mtf-image}
\end{figure}
\end{example}

\section{The Temporal Markov Transition Field}
\label{sec:tmtf}

\subsection{Formal Definition}

\begin{definition}[Temporal Segmentation]
\label{def:chunks}
Given a time series of length $T$ and a chunk count $K \in \mathbb{N}$ with
$K \mid T$, the $k$-th \emph{temporal chunk} is the contiguous index set
\[
  C_k = \bigl\{t \in \mathbb{N} : (k-1)\tfrac{T}{K} < t \le k\tfrac{T}{K}\bigr\},
  \qquad k = 1, \ldots, K.
\]
The \emph{chunk function} $\chunk : \{1,\ldots,T\} \to \{1,\ldots,K\}$ maps each
time index to its chunk: $\chunk(t) = k$ iff $t \in C_k$.
\end{definition}

\begin{definition}[Local Transition Matrix]
\label{def:local-matrix}
For chunk $C_k$, the \emph{local transition matrix} $W^{(k)}$ is the
$Q \times Q$ empirical transition probability matrix estimated from the
transitions within $C_k$ only:
\[
  W^{(k)}_{lm}
  = \frac{\#\{t \in C_k : b_t = l,\; b_{t+1} = m,\; t+1 \in C_k\}}
         {\#\{t \in C_k : b_t = l,\; t+1 \in C_k\}},
  \quad l, m \in \{1,\ldots,Q\}.
\]
Only within-chunk consecutive pairs are counted; transitions crossing a chunk
boundary are excluded.
\end{definition}

\begin{definition}[Temporal Markov Transition Field]
\label{def:tmtf}
Given state sequence $\mathbf{b}$, chunk count $K$, and local matrices
$W^{(1)}, \ldots, W^{(K)}$, the \emph{Temporal Markov Transition Field} is the
$T \times T$ matrix $\mathcal{M}$ with entries
\begin{equation}
  \mathcal{M}_{ij}
  = W^{(\chunk(i))}_{b_i,\, b_j},
  \qquad i, j \in \{1,\ldots,T\}.
  \label{eq:tmtf}
\end{equation}
\end{definition}

The entry $\mathcal{M}_{ij}$ answers: given the dynamics prevailing in the
temporal segment containing time step $i$, how probable is the transition from
the state at time $i$ to the state at time $j$? The probability is drawn from
the local matrix $W^{(\chunk(i))}$ estimated from $C_{\chunk(i)}$ alone.

\subsection{Structural Properties}

\begin{proposition}[Band Structure]
\label{prop:bands}
The TMTF image is partitioned into $K$ horizontal bands corresponding to
$C_1, \ldots, C_K$. Rows $i, i'$ in the same chunk produce the same row
texture if and only if $b_i = b_{i'}$. Rows $i, i'$ in \emph{different}
chunks may produce distinct textures even when $b_i = b_{i'}$, provided
$W^{(\chunk(i))} \ne W^{(\chunk(i'))}$.
\end{proposition}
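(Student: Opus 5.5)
The proof will follow the pattern of Proposition~\ref{prop:row-degen}, working directly from Definition~\ref{def:tmtf}. First, the band partition itself is immediate. By Definition~\ref{def:chunks}, $C_1,\ldots,C_K$ partition $\{1,\ldots,T\}$ into contiguous index blocks of length $T/K$. Consider any fixed row $i$. Every entry $\mathcal{M}_{ij}$ is drawn from the single matrix $W^{(\chunk(i))}$, so the rows indexed by $C_k$ form a horizontal band of height $T/K$ all governed by $W^{(k)}$. I will state this as: for $i\in C_k$, row $i$ of $\mathcal{M}$ equals the vector $\bigl(W^{(k)}_{b_i,b_j}\bigr)_{j=1}^T$.

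Second, for rows in the same chunk, I repeat the argument of Proposition~\ref{prop:row-degen} with $W$ replaced by $W^{(k)}$. If $i,i'\in C_k$ and $b_i=b_{i'}$, then $\mathcal{M}_{ij}=W^{(k)}_{b_i,b_j}=W^{(k)}_{b_{i'},b_j}=\mathcal{M}_{i'j}$ for all $j$. For the converse, suppose $b_i\ne b_{i'}$. The rows of $\mathcal{M}$ are the rows $b_i$ and $b_{i'}$ of $W^{(k)}$, read through the column map $j\mapsto b_j$. Since $\mathbf{b}$ attains every state $1,\ldots,Q$ under quantile binning, this map is surjective. Hence the two image rows coincide exactly when rows $b_i$ and $b_{i'}$ of $W^{(k)}$ coincide.

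This converse is the main obstacle. Distinct rows of $W^{(k)}$ need not differ: two states may have identical local transition distributions, or a state may not occur within $C_k$ with a successor, which leaves its row undefined or conventionally zero. So the ``only if'' direction holds only generically. I will make it precise in the same way as Proposition~\ref{prop:row-degen} did, namely ``differ in general''. The exact statement I would prove is that same-chunk rows coincide if and only if $W^{(k)}_{b_i,\cdot}=W^{(k)}_{b_{i'},\cdot}$, and that this equality fails whenever the two local rows are distinct.

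Third, for rows in different chunks, take $i\in C_k$ and $i'\in C_{k'}$ with $k\ne k'$ and $b_i=b_{i'}=s$. The two rows are $\bigl(W^{(k)}_{s,b_j}\bigr)_j$ and $\bigl(W^{(k')}_{s,b_j}\bigr)_j$. By surjectivity of $j\mapsto b_j$, they differ precisely when row $s$ of $W^{(k)}$ differs from row $s$ of $W^{(k')}$. The hypothesis $W^{(k)}\ne W^{(k')}$ guarantees a difference in some row, so distinct textures are possible, which is all the statement claims (``may''). To show this possibility is realised, I would point forward to the worked example, where the chunk-1 and chunk-2 local matrices differ on the relevant states. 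This contrasts with Proposition~\ref{prop:row-degen}, under which such rows are forced to be identical.
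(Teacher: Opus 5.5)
Your proposal is correct and follows the paper's proof. For rows in the same chunk, you rerun the Row Degeneracy argument with $W^{(k)}$ in place of $W$, and for rows in different chunks you compare row $b_i$ of the two local matrices. Your version is sharper than the paper's in two places. The paper asserts that $W^{(c)} \ne W^{(c')}$ by itself yields some $j$ with $W^{(c)}_{b_i,b_j} \ne W^{(c')}_{b_i,b_j}$, whereas you correctly require the difference to lie in row $b_i$. You also note that the same-chunk converse holds only when the relevant local rows are actually distinct.
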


\begin{proof}
Within a chunk, $\mathcal{M}_{ij} = W^{(\chunk(i))}_{b_i,b_j}$. For
$\chunk(i)=\chunk(i')$, the same local matrix governs both rows, so the rows
agree iff $b_i = b_{i'}$ by an argument identical to
Proposition~\ref{prop:row-degen}. Across chunks, if
$W^{(c)} \ne W^{(c')}$ for $c=\chunk(i)$, $c'=\chunk(i')$, then even with
$b_i=b_{i'}$ there exists $j$ with $W^{(c)}_{b_i,b_j} \ne W^{(c')}_{b_i,b_j}$.
\end{proof}

The TMTF can therefore produce up to $K \cdot Q$ distinct row patterns, compared
to at most $Q$ for the global MTF. The additional $K$-fold discrimination is
precisely the temporal information that the global MTF discards.

\begin{proposition}[Graceful Degradation to Global MTF]
\label{prop:degrade}
If $W^{(1)} = \cdots = W^{(K)} = W$, then $\mathcal{M}_{ij} = M_{ij}$ for
all $i,j$: the TMTF reduces to the global MTF.
\end{proposition}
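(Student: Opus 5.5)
The plan is a direct unfolding of the two definitions; no auxiliary results are needed beyond Definition~\ref{def:global-mtf} and Definition~\ref{def:tmtf}. Fix an arbitrary pair $(i,j)$ with $i,j \in \{1,\ldots,T\}$. By Definition~\ref{def:tmtf}, $\mathcal{M}_{ij} = W^{(\chunk(i))}_{b_i,b_j}$, where $\chunk(i) \in \{1,\ldots,K\}$ is well defined because the chunks $C_1,\ldots,C_K$ of Definition~\ref{def:chunks} partition $\{1,\ldots,T\}$. The hypothesis $W^{(1)} = \cdots = W^{(K)} = W$ holds for every chunk index, so in particular $W^{(\chunk(i))} = W$, whatever the value of $\chunk(i)$. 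Substituting gives
\[
  \mathcal{M}_{ij} = W_{b_i,b_j} = M_{ij},
\]
where the last equality is exactly Definition~\ref{def:global-mtf}. Since $(i,j)$ was arbitrary, $\mathcal{M} = M$ entrywise.

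I would take care over one point: the symbol $W$ in the hypothesis must be the same global matrix that enters $M$. The statement names it $W$, matching the notation of Definition~\ref{def:global-mtf}, so I read it that way. The hypothesis then asserts that every local matrix coincides with the global empirical matrix. It does not merely assert that the local matrices agree with one another.

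If one wanted the weaker hypothesis, that the local matrices are only mutually equal to some common $\widetilde W$, the same computation still shows that $\mathcal{M}$ is the field generated by $\widetilde W$. Identifying $\widetilde W$ with $W$ would then need an extra step: the global counts are the sums of the within-chunk counts plus the $K-1$ excluded boundary-crossing transitions. With those transitions present, equality is not automatic. I would therefore avoid this detour and use the statement's hypothesis as literally written.

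Consequently I expect no genuine obstacle; the only step requiring care is this notational identification. As a remark after the argument, I would note that this is consistent with Proposition~\ref{prop:bands}. Under the hypothesis, rows in different chunks with equal states coincide, so the $K\cdot Q$ possible row patterns collapse to the at most $Q$ patterns of Proposition~\ref{prop:row-degen}.
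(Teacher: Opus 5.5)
Your proof is correct and is exactly the paper's argument: substitute the hypothesis $W^{(\chunk(i))} = W$ into Definition~\ref{def:tmtf}, then read off $M_{ij}$ from Definition~\ref{def:global-mtf}. Your side remark is also accurate, and the paper does not make it: mutual equality of the local matrices alone would not force them to equal the global $W$, because the global counts include the $K-1$ excluded boundary-crossing transitions.
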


\begin{proof}
$\mathcal{M}_{ij} = W^{(\chunk(i))}_{b_i,b_j} = W_{b_i,b_j} = M_{ij}$.
\end{proof}

Proposition~\ref{prop:degrade} ensures that the TMTF does not misrepresent
genuinely stationary series: when the dynamics do not vary across chunks, the
image is identical to the global MTF. The gain in representational flexibility
comes at no cost when that flexibility is not needed.

\begin{proposition}[Amplitude Agnosticism]
\label{prop:amplitude}
The TMTF is invariant to any strictly increasing transformation
$f : \R \to \R$ applied to the observations.
\end{proposition}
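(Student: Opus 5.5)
The plan is to show that a strictly increasing $f$ leaves the state sequence $\mathbf{b}$ unchanged. Everything downstream in Definitions~\ref{def:chunks}--\ref{def:tmtf} is then a function of $\mathbf{b}$, $T$ and $K$ alone, so the TMTF cannot change.

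\textbf{Step 1: ranks are preserved.} Write $\mathbf{y} = (f(x_1), \ldots, f(x_T))$. Since $f$ is strictly increasing, $x_s < x_t \iff y_s < y_t$ and $x_s = x_t \iff y_s = y_t$. So the permutation sorting $\mathbf{x}$ (with any fixed tie-breaking convention) also sorts $\mathbf{y}$, and the multiset of ties is identical.

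\textbf{Step 2: states are preserved.} The quantile bins of Definition~\ref{def:bins} are determined by rank. The $k$-th bin consists of the observations whose ranks fall in the $k$-th block of roughly $T/Q$ consecutive ranks, with the same boundary adjustment for divisibility. One admissible choice of boundaries for $\mathbf{y}$ is $f(q_0) < \cdots < f(q_Q)$, suitably interpreted at the endpoints. Then $x_t \in [q_{k-1}, q_k)$ iff $y_t \in [f(q_{k-1}), f(q_k))$, because $f$ is strictly monotone. Hence the state sequence of $\mathbf{y}$ equals $\mathbf{b}$.

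This is the main subtlety. The paper's definition of empirical quantile boundaries is somewhat informal, and $f$ need not be continuous. The boundaries of $\mathbf{y}$ therefore need not literally equal $f(q_k)$; for example, interpolated quantiles are not preserved by nonlinear $f$. I would handle this by observing that the bin assignment depends only on which observations fall in which rank block. Any boundary value strictly between the largest member of one block and the smallest member of the next yields the same assignment, so the state sequence is well defined as a function of the ranks alone, whatever convention is used for the boundaries.

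\textbf{Step 3: the construction depends only on $\mathbf{b}$.} The chunks $C_k$ and the function $\chunk$ depend only on $T$ and $K$, which $f$ does not affect. Each $W^{(k)}$ in Definition~\ref{def:local-matrix} is a ratio of counts of events of the form $b_t = l$, $b_{t+1} = m$, so it is unchanged. Finally, $\mathcal{M}_{ij} = W^{(\chunk(i))}_{b_i, b_j}$ is unchanged for all $i, j$. The same argument also recovers the invariance of the global MTF as the case $K = 1$.
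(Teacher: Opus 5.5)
Your proof is correct and follows the same route as the paper: a strictly increasing $f$ preserves the rank ordering and hence the state sequence $\mathbf{b}$, and each $W^{(k)}$ and each entry $\mathcal{M}_{ij}$ is a function of $\mathbf{b}$ alone (with $T$ and $K$ fixed). Your Step~2 is more careful than the paper, which simply asserts that the quantile boundaries are preserved; as you note, the boundary values themselves need not be invariant (for example under interpolated quantiles), but the assignment of observations to rank blocks is.
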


\begin{proof}
A strictly increasing $f$ preserves the rank ordering of observations and hence
the quantile boundaries and state sequence $\mathbf{b}$. All local matrices
$W^{(k)}$ are functions of $\mathbf{b}$ alone, so they are unchanged. By
\eqref{eq:tmtf}, $\mathcal{M}_{ij}$ is unchanged.
\end{proof}

\subsection{The Column Asymmetry}

Equation~\eqref{eq:tmtf} applies the chunk function to the \emph{row} index $i$
but not to the column index $j$. The column state $b_j$ is drawn from the global
state sequence, spanning the entire series. This asymmetry is deliberate.

The entry $\mathcal{M}_{ij}$ concerns the transition \emph{departing} from time
$i$. The dynamics relevant to this departure are those prevailing around time $i$,
encoded in $W^{(\chunk(i))}$. The destination at time $j$ identifies only the
target state; the dynamics at time $j$ do not govern the departure from $i$.

A concrete consequence: the columns are not partitioned into vertical bands. The
column $j$ appearing in a row from $C_1$ and the same column $j$ appearing in a
row from $C_2$ both refer to the same destination state $b_j$ at the same time
$j$. The two rows are therefore directly comparable: if they differ, it is because
the local transition matrices differ, not because the destination states differ.
This cross-chunk comparison information would be lost if the image were decomposed
into $K$ independent block matrices.

\subsection{Bias--Variance Trade-off}

The local matrix $W^{(k)}$ is estimated from $T/K - 1$ transitions, compared to
$T-1$ for the global matrix. With fewer observations, $W^{(k)}$ has higher
variance. The expected squared error of each entry $\hat{W}^{(k)}_{lm}$ is of
order $1/(n_k \cdot \hat{P}(b_t=l))$ where $n_k = T/K - 1$, increasing as $K$
grows.

Against this variance cost is a bias reduction when the dynamics are genuinely
non-stationary. If the true local matrix in chunk $k$ is $W^{(k)}_* \ne W_*$
(the global true matrix), the global estimator carries bias $W - W^{(k)}_*$ for
entries in chunk $k$, while the local estimator is unbiased within the chunk.

A practical minimum-transitions rule requires each local matrix to have at least
$5Q$ transitions per row on average, implying $n_k \ge 5Q^2$, i.e.,
$T/K \ge 5Q^2 + 1$. For $Q=6$ and $T=400$ this permits $K \le 2.2$; for $T=1000$
it permits $K \le 5.5$. For the typical range $T \in [200,1000]$ with
$Q \in \{6,10,14\}$, $K=4$ is a robust default that provides temporal resolution
while keeping each chunk above the estimation threshold at $T \ge 400$.

\section{Worked Example: TMTF with \texorpdfstring{$K=2$}{K=2}}

\begin{example}[TMTF with $K=2$]
\label{ex:tmtf}
Continuing from Example~\ref{ex:global}, we apply the TMTF with $K=2$ equal
chunks of six observations each.

\textbf{Temporal segments.}
$C_1 = \{1,\ldots,6\}$ with $\mathbf{b}^{(1)} = (1,3,2,1,3,2)$;
$C_2 = \{7,\ldots,12\}$ with $\mathbf{b}^{(2)} = (1,1,2,2,3,3)$.
Both chunks visit all three states. The regime difference is entirely in how the
series moves between them.

\textbf{Local matrix $W^{(1)}$ (mean-reverting regime).}
Within-chunk transitions (transition from $t=6$ to $t=7$ is excluded as the
successor falls in $C_2$):
$1\to3,\quad 3\to2,\quad 2\to1,\quad 1\to3,\quad 3\to2$.

\begin{center}
\begin{tabular}{lrrrr}
\toprule
From & To 1 & To 2 & To 3 & Total \\
\midrule
State 1 & 0 & 0 & 2 & 2 \\
State 2 & 2 & 0 & 0 & 2 \\
State 3 & 0 & 2 & 0 & 2 \\
\bottomrule
\end{tabular}
\end{center}

Row totals sum to $|C_1| - 1 = 5$.\checkmark
\[
  W^{(1)} = \begin{pmatrix}
    0 & 0 & 1 \\
    1 & 0 & 0 \\
    0 & 1 & 0
  \end{pmatrix}.
\]
Every diagonal entry is zero: the series never stays in the same state. The
unit off-diagonal entries encode the deterministic cycle
$1\to3\to2\to1\to\cdots$. This is an extreme mean-reverting signature.

\textbf{Local matrix $W^{(2)}$ (persistent regime).}
Within-chunk transitions (transition from $t=12$ excluded):
$1\to1,\quad 1\to2,\quad 2\to2,\quad 2\to3,\quad 3\to3$.

\begin{center}
\begin{tabular}{lrrrr}
\toprule
From & To 1 & To 2 & To 3 & Total \\
\midrule
State 1 & 1 & 1 & 0 & 2 \\
State 2 & 0 & 1 & 1 & 2 \\
State 3 & 0 & 0 & 1 & 1 \\
\bottomrule
\end{tabular}
\end{center}

\begin{remark}
State 3 appears twice in $C_2$ (at $t=11$ and $t=12$) but contributes only
one outgoing transition, since $t=12$ is the final observation. Row totals
sum to $|C_2| - 1 = 5$.\checkmark
\end{remark}

\[
  W^{(2)} = \begin{pmatrix}
    0.50 & 0.50 & 0.00 \\
    0.00 & 0.50 & 0.50 \\
    0.00 & 0.00 & 1.00
  \end{pmatrix}.
\]
The diagonal entries are large (0.50, 0.50, 1.00): the series strongly tends to
stay in its current state. The matrix is upper-triangular with zero
lower-triangular entries: the series can stay or move to a higher state, but
never returns to a lower state. State 3 is absorbing ($W^{(2)}_{33}=1.0$):
this is the signature of a persistent upward trend.

\begin{figure}[htbp]
\centering
\includegraphics[width=0.82\textwidth]{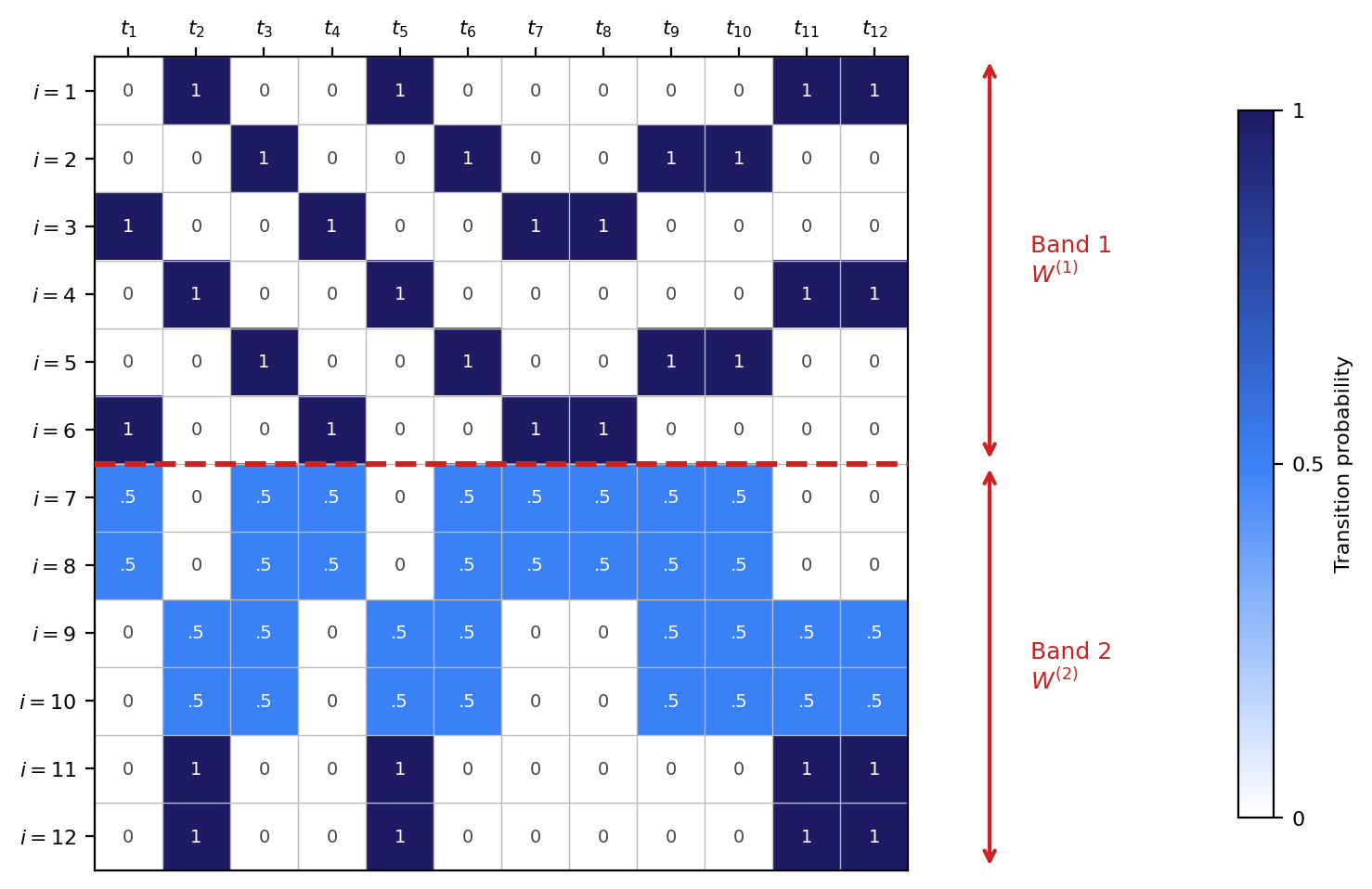}
\caption{The assembled $12\times12$ TMTF image for the worked example ($K=2$,
$Q=3$). Rows $i=1$--$6$ (top band, above the dashed line) are governed by
$W^{(1)}$ and contain only binary values $\{0,1\}$, producing a stark
alternating texture reflecting the deterministic mean-reverting cycle of
chunk~1. Rows $i=7$--$12$ (bottom band) are governed by $W^{(2)}$ and
contain values in $\{0, 0.5, 1\}$, producing a softer texture reflecting
the persistent upward dynamics of chunk~2. A CNN filter traversing the image
detects the texture change at the band boundary and associates it with the
regime transition at $t=6$.}
\label{fig:tmtf-image}
\end{figure}

\textbf{TMTF image.} By \eqref{eq:tmtf}, rows $i \in C_1$ are governed by
$W^{(1)}$ and rows $i \in C_2$ by $W^{(2)}$. The state sequence
$\mathbf{b}^{(1)} = (1,3,2,1,3,2)$ produces a top band whose row textures cycle
through the rows of $W^{(1)}$: all-zero diagonal, strongly off-diagonal. The
state sequence $\mathbf{b}^{(2)} = (1,1,2,2,3,3)$ produces a bottom band whose
textures reflect $W^{(2)}$: heavy diagonal, upper-triangular.
A CNN filter detecting the texture contrast between the two bands identifies a
process that transitioned from mean-reverting to persistent dynamics.
\end{example}

\section{Geometric Interpretation of Local Transition Matrices}

The structure of each local matrix $W^{(k)}$ encodes a direct geometric signature
of the process dynamics within segment $C_k$. We describe the four canonical
patterns corresponding to the main classes of time series behaviour.

\textbf{Diagonal-heavy matrix.} Large diagonal entries $W^{(k)}_{ll}$ for all $l$
indicate the series tends to remain in its current state. The trajectory lingers
in each quantile band. This is the signature of \emph{persistence}: characteristic
of near-unit-root processes ($\phi \lesssim 1$) and slowly drifting series. In
the image, the band produced by chunk $k$ has a dark stripe along the main
diagonal.

\textbf{Spread matrix.} Small diagonal entries and substantial off-diagonal mass
indicate the series departs rapidly from its current state, oscillating across
quantile bands without settling. This is the signature of \emph{mean reversion}:
characteristic of stationary AR(1) processes with $|\phi| \ll 1$. In the image,
the band has a diffuse, near-uniform texture.

\textbf{Upper-triangular matrix.} Probability mass concentrated above the main
diagonal, with negligible lower-triangular entries, indicates the series
predominantly moves to higher quantile states and rarely returns to lower ones.
This is the signature of a \emph{persistent upward trend}. The worked example's
$W^{(2)}$ is of this type. If additionally $W^{(k)}_{QQ} \approx 1$, the top
state is absorbing and the series has approached or crossed an explosive regime
in this segment.

\textbf{Uniform matrix.} All entries approximately equal to $1/Q$ indicates
transitions are nearly equiprobable from any state: the current state carries no
information about the next. This is the signature of a \emph{random walk}, in
which each step is independent of the current level.

These four patterns are the vocabulary with which the TMTF image describes the
dynamics. A CNN trained on TMTF images learns to recognise these textures and
their spatial arrangement within the band structure, associating configurations
of bands---for instance, uniform followed by diagonal-heavy---with specific
generative process histories.

\section{Multi-Resolution TMTF}

A natural extension is to compute multiple TMTF images for different values of
the bin count $Q$, producing a multi-resolution representation. This is directly
analogous to the multi-scale permutation entropy of \cite{kay2024permutation},
who compute permutation entropy over a grid of embedding dimensions $n$ and
delays $\tau$ to capture information content at multiple temporal scales.

For a fixed $K$ and a set of bin counts $\{Q_1, Q_2, \ldots, Q_R\}$, the
multi-resolution TMTF produces $R$ separate $T \times T$ images, each encoding
the transition dynamics at a different quantile resolution. A coarser $Q$ yields
a more robust estimate of gross persistence or mean-reversion patterns; a finer
$Q$ yields more discriminative estimates of the detailed transition structure at
higher variance.

When used as input to a CNN, the $R$ images are stacked as separate input
channels. The convolutional backbone processes all channels simultaneously,
learning complementary features from each resolution. In the stationarity
characterisation application accompanying this paper, $Q \in \{6, 10, 14\}$ are
used with $K=4$ chunks, producing three TMTF channels. Channel saliency analysis
shows that the three channels carry complementary information: the coarser $Q=6$
channel receives the highest gradient attribution for the $\phi$ regression task,
consistent with coarser bins producing more stable per-chunk transition estimates.

The design of multi-resolution TMTF follows the same logic as multi-scale
permutation entropy: both represent a sweep over representation resolutions, and
both produce a family of representations that are individually noisier but
collectively more discriminative than any single member. The TMTF framework
accommodates this extension naturally since each bin count $Q$ yields an
independent $T \times T$ image with its own local transition matrices.

\section{Discussion and Conclusion}

We have introduced the Temporal Markov Transition Field, a representation that
extends the global MTF of \cite{wang2015imaging} to capture time-varying
transition dynamics. The key modification---replacing the single global transition
matrix with $K$ locally estimated matrices---produces a $T \times T$ image with
$K$ horizontal bands, each carrying the transition texture of its temporal
segment.

Three properties make the TMTF well-suited as a CNN input channel for time series
analysis. First, amplitude agnosticism (Proposition~\ref{prop:amplitude}):
invariance to any monotone transformation removes the need for normalisation and
ensures comparability across series at different scales. Second, graceful
degradation (Proposition~\ref{prop:degrade}): when the dynamics are genuinely
stationary, the TMTF reduces exactly to the global MTF, introducing no
unnecessary complexity. Third, interpretability: the texture of each horizontal
band corresponds to a geometrically meaningful property of the local transition
matrix for that segment.

The main design parameter is the chunk count $K$. We have characterised the
bias--variance trade-off: larger $K$ reduces bias under regime-switching but
increases estimation variance per local matrix. The practical guideline
$K \le T/(5Q^2+1)$ ensures sufficient transitions per chunk for reliable
estimation. For series lengths $T \in [200, 1000]$ and bin counts
$Q \in \{6, 10, 14\}$, $K=4$ is a robust default.

The column asymmetry of the TMTF---the chunk function applies to row indices but
not column indices---is a deliberate design choice that preserves cross-time
comparability in the columns while allowing local dynamics to vary by row band.
This means the TMTF image is not decomposable into $K$ independent block matrices:
the off-diagonal blocks carry genuine cross-chunk comparison information.

The relationship between the TMTF and other ordinal representations of time
series---including the multi-scale permutation entropy of \cite{kay2024permutation}
and the multi-scale MTF of \cite{liu2024multiscale}---suggests a family of
representations parametrised by the resolution swept (bins $Q$ in the TMTF,
embedding dimension and delay in permutation entropy) and the temporal
segmentation ($K$ in the TMTF, window size in windowed permutation entropy).
Exploring the theoretical relationships among these representations is a direction
for future work.

\bigskip
\noindent\textbf{Acknowledgements.} This work was conducted as part of the
stationarity characterisation project described in the companion paper.

\bibliographystyle{plain}

\end{document}